\newtheorem{theorem}{Theorem}
\newtheorem{proposition}{Proposition}
\newtheorem{definition}{Definition}
\newtheorem{assumption}{Assumption}
\newtheorem{remark}{Remark}
\newtheorem{fact}{Fact}
\newcommand{\R}{\mathbb{R}}
\newcommand{\M}{\mathcal{M}}
\newcommand{\N}{\mathcal{N}}
\newcommand{\W}{\mathcal{W}}
\title{\LARGE \bf
Manifold-Guided Lyapunov Control with Diffusion Models
}
\author{Amartya Mukherjee, Thanin Quartz, and Jun Liu
\thanks{Amartya Mukherjee, Thanin Quartz, and Jun Liu are with the Department of Applied Mathematics, University of Waterloo, Waterloo, Ontario, Canada N2L 3G1 (email: {\tt\small (a29mukhe, tquartz, j.liu)@uwaterloo.ca}).}%
}
\date{}
\begin{document}

\maketitle
\thispagestyle{empty}
\pagestyle{empty}

\begin{abstract}

This paper presents a novel approach to generating stabilizing controllers for a large class of dynamical systems using diffusion models. The core objective is to develop stabilizing control functions by identifying the closest asymptotically stable vector field relative to a predetermined manifold and adjusting the control function based on this finding. To achieve this, we employ a diffusion model trained on pairs consisting of asymptotically stable vector fields and their corresponding Lyapunov functions. Our numerical results demonstrate that this pre-trained model can achieve stabilization over previously unseen systems efficiently and rapidly, showcasing the potential of our approach in fast zero-shot control and generalizability.
\end{abstract}

\section{INTRODUCTION}

One of the longstanding challenges in nonlinear systems and control is the construction of Lyapunov functions. While Lyapunov functions can essentially be characterized by solutions to partial differential equations (PDEs) and neural network solutions to such PDEs can effectively provide Lyapunov functions \cite{liu2023physics}, solving PDEs for each system can still be time-consuming. Another challenge in computing neural Lyapunov functions is the inclusion of a satisfiability modulo theories (SMT) solver for counterexample-guided training. Verifying that a neural network satisfies all the conditions necessary to be a Lyapunov function is often computationally expensive and significantly increases the training time of such methods. Additionally, the neural Lyapunov function only aids in the verification of a single control system, thus failing to take into account possible uncertainties in the model parameters or the ability to generalize into systems with slightly different dynamics. This could pose difficulties when testing the controller in real-life systems. 

Generative models, such as diffusion models \cite{ho2020denoising,song2020score}, map inputs from a noisy space to the data space. In more recent works, diffusion models have learned to transform noisy inputs or boundary conditions into solutions for PDEs, including the Poisson equation and the Navier-Stokes equation, without incorporating any information about the PDEs into their loss function \cite{apte2023diffusion,ovadia2023ditto}. Furthermore, denoising diffusion restoration models (DDRMs) \cite{kawar2022denoising,murata2023gibbsddrm} can restore clean data in linear inverse problems using a pre-trained diffusion model, eliminating the need for fine-tuning. 

In this paper, we investigate the potential of using pre-trained generative diffusion models to generate Lyapunov functions and stabilizing controllers for nonlinear systems. More specifically, we aim to train a diffusion model, \(G\), that generates pairs of asymptotically stable vector fields and their corresponding Lyapunov functions. This simplifies the control problem to finding a controller such that the closed-loop system possesses a Lyapunov-stable vector field, indicating it falls within the range of \(G\). The Lyapunov function produced by \(G\) further assists in verifying stability for the control problem. This approach offers several advantages over existing learning-based methods for Lyapunov functions (e.g., \cite{chang2019neural,zhou2022neural}): (a) it can significantly reduce the computational effort required to identify a stabilizing controller, (b) it has the capability to generalize to control problems not encountered in the training data, and (c) it can offer stability guarantees with the generated Lyapunov functions.

We propose \textbf{M}anifold \textbf{G}uided \textbf{L}yapunov \textbf{C}ontrol (\textbf{MGLC}), a novel approach to feedback control problems based on diffusion models. We assume that the stabilized controlled system is drawn from a manifold $\M$ that contains asymptotically stable dynamical systems. We also introduce a manifold $\W$ that contains all possible configurations of the dynamical system with different control functions. This simplifies the control problem to finding points where $\M$ and $\W$ intersect. We train a diffusion model with pairs of asymptotically stable vector fields and their Lyapunov functions to estimate $\M$ and incorporate the control design process into the reverse diffusion process. At every time step, we estimate the projection of our vector field into $\M$ by computing Tweedie's estimate, and we update the parameters of the controller to minimize a loss function that reduces the distance between our vector field between the two vector fields. 

Our proposed solution to this problem significantly reduces the computational cost of finding a Lyapunov function. Rather than training a neural network for a specific system, often facilitated by an SMT solver for generating counterexamples \cite{ahmed2020automated}, we use a pre-trained diffusion model to directly output a Lyapunov function that verifies the convergence of our system. As a result, this approach will be useful for solving a large class of control problems rather than a single control problem, thus facilitating zero-shot control, where we achieve stabilization of unseen systems.
We introduce theoretical guarantees that our reverse diffusion sampling method will converge to the manifold $\M$, thus concluding that we have derived a stabilizing controller.
We train the generator on a dataset of pairs of asymptotically stable vector fields and their Lyapunov functions.
Finally, we test our method on four control problems whose dynamics are not in the dataset and are therefore unseen. 


\subsubsection*{Notation}

Denote $X$ as the discretization of the domain into a grid and $f(X,u(X))$ as the vector field $f$ projected onto the grid $X$ with controller $u(\cdot)$. We use $\psi=(\psi_1,...,\psi_n)$ to refer to the learnable parameters for a controller $u^{(\psi)}(\cdot)$, where $n$ is the number of learnable parameters. Subsequently, $\psi_t$ are the parameters of the control system at time step $t$ in the denoising diffusion process.
Therefore, $x_t:=f(X,u^{(\psi_t)}(X))$ denotes the projection of the vector field $f(\cdot,u^{(\psi_t)}(\cdot))$ on the gridpoints $X$ at time step $t$ in the denoising diffusion process, where the parameters of the controller are $\psi_t$. Lastly, $x_{0|t}:=E[x_0|x_t]$ refers to the estimate of the asymptotically stable vector field $x_0$ conditioned on the vector field at time step $t$, $x_t$. This estimate is computed using Tweedie's formula, which we will formally define in Section \ref{sec:diffusion}. 

\section{PRELIMINARIES AND PROBLEM FORMULATION}

\subsection{Lyapunov Stability}

Throughout this work we consider a control system of the following form 
\begin{equation}\label{eq:2.1}
    \dot{x}=f(x, u),\quad x(0)=x_0,
\end{equation}
where $x \in D$ is the state space of the system, $D \subset \mathbb{R}^n$ is an open set containing the origin and $u \in U \subset \mathbb{R}^m$ is the feedback control input given by $u=u(x)$, where $u(\cdot)$ is a continuous function of the state. Moreover, we assume that $f$ is a locally Lipschitz continuous vector field, $u(\cdot)$ is locally Lipschitz with $u(0)=0$, and $f(0, 0)=0$.
\begin{definition} (Asymptotic Stability)
    The system \eqref{eq:2.1}, under a feedback controller $u=u(x)$, is asymptotically stable at the origin if for any $\epsilon\in\R^+$, there exists $\delta(\epsilon)\in\R^+$ such that if $||x_0||<\delta$, then $||x(t)||<\epsilon$ for all $t\geq 0$. Furthermore, there exists some $\rho>0$ such that $||x_0||<\rho$  implies $\lim_{t\to\infty}||x(t)||=0$.
\end{definition}

\begin{definition}[Lie Derivatives]
    Let $D\subset\R^n$. The Lie derivative of a continuously differentiable scalar function $V:D\to\R$ over a vector field $f(\cdot,u(\cdot))$ and feedback controller $u(\cdot)$ is defined as
    \begin{equation}
        \dot V(x)=\sum_{i=1}^{n}\frac{\partial V}{\partial x_i}\dot x_i=\sum_{i=1}^{n}\frac{\partial V}{\partial x_i}f_i(x,u(x))
    \end{equation}
\end{definition}

\begin{theorem}
    If there exists a continuously differentiable scalar function $V(x)$ satisfying $V(0)=0$, $V(x)>0$ if $x\neq 0$, and $\nabla V(x)\cdot f(x,u(x))<0$, for a given feedback controller $u$, then the system is asymptotically stable (or Lyapunov-stable) at the origin, and $V$ is called a Lyapunov function \cite{chang2019neural}.
\end{theorem}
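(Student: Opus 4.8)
The plan is to prove the two components of asymptotic stability—Lyapunov stability in the $\epsilon$--$\delta$ sense and attractivity ($\lim_{t\to\infty}\|x(t)\|=0$)—separately, using the sublevel sets of $V$ as trapping regions. The whole argument rests on two structural facts: $V$ is positive definite (so its sublevel sets are nested neighborhoods of the origin) and $\dot V$ is strictly negative along trajectories (so those sublevel sets are forward invariant). Throughout I would invoke the local Lipschitz hypotheses on $f$ and $u$ to guarantee existence and uniqueness of solutions, so that the Lie derivative $\dot V(x(t))=\nabla V(x(t))\cdot f(x(t),u(x(t)))$ is well defined along each trajectory.

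First I would establish stability. Fix $\epsilon>0$ small enough that the closed ball $\bar B_\epsilon=\{x:\|x\|\le\epsilon\}\subset D$. Since $V$ is continuous and $V(x)>0$ for $x\neq 0$, the quantity $m:=\min_{\|x\|=\epsilon}V(x)$ exists and is strictly positive by compactness of the sphere. Pick $\beta$ with $0<\beta<m$ and consider the sublevel set $\Omega_\beta=\{x\in\bar B_\epsilon:V(x)\le\beta\}$. Because $V(0)=0$ and $V$ is continuous, there is a $\delta\in(0,\epsilon)$ with $\|x\|<\delta\Rightarrow V(x)<\beta$. The hypothesis $\nabla V(x)\cdot f(x,u(x))<0$ gives $\dot V(x(t))<0$, so $V(x(t))$ is nonincreasing; hence a trajectory with $\|x_0\|<\delta$ satisfies $V(x(t))\le V(x_0)<\beta<m$ for all $t\ge 0$ and can never reach the sphere $\|x\|=\epsilon$. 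This confines it to $\bar B_\epsilon$, giving $\|x(t)\|<\epsilon$, which is exactly stability.

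Next I would prove attractivity for initial conditions in $\Omega_\beta$. Along such a trajectory $V(x(t))$ is nonincreasing and bounded below by $0$, so it converges to some $c\ge 0$; the goal is $c=0$. Suppose instead $c>0$. Then by positive definiteness the trajectory remains in the compact annulus $K=\{x\in\bar B_\epsilon: c\le V(x)\le V(x_0)\}$, which excludes the origin. On $K$ the continuous function $\nabla V\cdot f(\cdot,u(\cdot))$ attains a maximum $-\gamma<0$, so $\dot V(x(t))\le-\gamma$, yielding $V(x(t))\le V(x_0)-\gamma t\to-\infty$ and contradicting $V\ge 0$. Hence $c=0$, and since $V$ is continuous and positive definite this forces $\|x(t)\|\to 0$.

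I expect the main obstacle to be the attractivity step, where compactness of the annulus $K$ and the resulting uniform bound $\dot V\le-\gamma$ must be handled carefully—this is where continuity of $V$, the existence of solutions, and the exclusion of the origin from $K$ all combine. The stability portion is essentially a continuity-and-compactness bookkeeping exercise, whereas the strict-decrease-to-contradiction argument is the conceptual heart of the proof.
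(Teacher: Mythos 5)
The paper does not prove this theorem at all: it is stated as a classical result and delegated to the cited reference, so there is no in-paper argument to compare against. Your proof is the standard textbook argument (essentially Khalil's Theorem 4.1) and it is correct: sublevel sets $\Omega_\beta$ with $\beta<\min_{\|x\|=\epsilon}V$ as forward-invariant trapping regions for stability, then the strict-decrease contradiction on a compact annulus for attractivity. Two small points are worth tightening. First, the hypothesis $\nabla V(x)\cdot f(x,u(x))<0$ cannot literally hold at $x=0$ since $f(0,0)=0$ forces the Lie derivative to vanish there; it must be read as holding for $x\neq 0$, and your argument is consistent with this because the annulus $K$ excludes the origin (which is exactly where the uniform bound $\dot V\le-\gamma$ would otherwise fail). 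Second, the closing assertion that $V(x(t))\to 0$ forces $\|x(t)\|\to 0$ deserves its one-line justification: if $\|x(t_k)\|\ge\eta>0$ along some sequence, then $V(x(t_k))\ge\min\{V(x):\eta\le\|x\|\le\epsilon\}>0$, a contradiction. You correctly note forward completeness follows from confinement to the compact set $\Omega_\beta\subset D$. Neither issue is a genuine gap; both are routine to patch.
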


Consequently, in control problems $\dot x=f(x,u)$, a feedback control function $u=k(x)$ is a stabilizing controller if $\dot x=f(x,k(x))$ is an asymptotically stable system. The existence of a Lyapunov function verifies the stability of the system. However, verifying the Lyapunov conditions is a difficult task that can be handled by SMT solvers as in  \cite{chang2019neural,zhou2022neural} to identify regions where the neural network $V$ violates the conditions for it to be a Lyapunov function and a penalty term for those regions is added to its loss function.

\subsection{Diffusion Models}
\label{sec:diffusion}

Denoising diffusion models \cite{song2019generative,song2020score} are among the current leading methods for generative modeling \cite{ho2020denoising, song2020denoising}. They have shown great success in applications such as the generation of images, speech, and video, as well as image super-resolution \cite{song2020denoising, yang_diffusion_2023}. Other applications include physics-guided human motion (e.g., PhysDiff \cite{yuan_physdiff_2023}), customized ODE solvers that are more efficient than Runge-Kutta methods \cite{lu_dpm-solver_2022}, molecule generation \cite{pmlr-v162-hoogeboom22a}, and more \cite{yang_diffusion_2023}. Furthermore, they are stable to train and are relatively easy to scale \cite{yang_diffusion_2023}. 

Diffusion models consist of a forward process where data is iteratively corrupted by adding noise. This is modeled by the following process:
\begin{equation}\label{eq:forward}
    x_t=\sqrt{\overline{\alpha}_t}x+\sqrt{1-\overline{\alpha}_t}\epsilon_t, \quad t\in[0,T],
\end{equation}
where $\overline{\alpha}_t>0$ is a scaling parameter that monotonically decreases with $t$, $\epsilon_t\sim\N(0,I)$ is Gaussian noise, and $x\sim p_{\text{data}}$ is clean data. The goal of a diffusion model is to reverse this process. This is done by introducing a denoising model $\epsilon_\theta(x_t,t)$ trained with the loss function $\|\epsilon_t-\epsilon_\theta(x_t,t)\|^2$ to estimate the Gaussian noise. This model approximates the score function $\nabla_{x_t}\log p_t(x_t)$.

This methodology allows us to generate realistic data. Particularly, denoising diffusion implicit models (DDIM) \cite{song2020denoising} perform sampling using a two-step procedure for multiple time steps. The first step involves computing $x_{0|t}:=E[x_0|x_t]$ using Tweedie's estimate, which we will define below.

\begin{definition}[Tweedie's estimate \cite{efron2011tweedie,chung2022improving}]
    The Tweedie's estimate for $E[x_0|x_t]$ governed by the process in equation \eqref{eq:forward} is given by the following expression:
    \begin{equation}\label{eq:Tweedie}
        x_{0|t}=\frac{x_t-\sqrt{1-\overline{\alpha}_t}\epsilon_\theta(x_t,t)}{\sqrt{\overline{\alpha}_t}}.
    \end{equation}
\end{definition}

The DDIM algorithm consists of the estimate of $x_{0|t}$ in equation \eqref{eq:Tweedie} coupled with the estimate of $x_{t-1}$ by the following process:
\begin{equation}\label{eq:DDIM}
    x_{t-1}=\sqrt{\overline{\alpha}_{t-1}}x_{0|t}+\sqrt{1-\overline{\alpha}_{t-1}}\epsilon_\theta(x_t,t).
\end{equation}
This process is repeated for multiple time steps. Note as we denoted previously, $x_t$ denotes the discretization of the right-hand side of (\ref{eq:2.1}) on grid points $X$ at time $t$ in the denoising diffusion process. Therefore, (\ref{eq:DDIM}) says we are updating discretized values of the right hand corresponding to updated parameters of the controller.

\subsection{Diffusion on Manifolds}

Diffusion models for inverse problems have gained interest with the advent of denoising diffusion restoration models (DDRM) \cite{kawar2022denoising}, originally used to deblur images by assuming that the original clean image was drawn from a pre-trained unconditional diffusion model. This field, otherwise known as guided diffusion, has been improved upon with diffusion posterior sampling \cite{chung2023diffusion}, partially collapsed Gibbs sampler \cite{murata2023gibbsddrm}, and manifold constraints \cite{chung2023diffusion}.

The problem of guided diffusion involves sampling from a conditional score function $\nabla_{x_t}\log p(x_t|y)$ conditioned on a variable $y$ instead of sampling from the data distribution $\nabla_{x_t}\log p(x_t)$ as done in the pre-trained diffusion model. In many applications, this conditioning $y$ can be a text input \cite{rombach2022high}, an image input, or an image that we intend to restore \cite{murata2023gibbsddrm}. This conditional score function can be separated into two terms using Bayes' theorem: $\nabla_{x_t}\log p(x_t|y)=\nabla_{x_t}\log p(x_t)+\nabla_{x_t}L_t(x_t;y)$. Sampling from $\nabla_{x_t}\log p(x_t)$ is done by our diffusion model, but the sampling procedure $\nabla_{x_t}L_t(x_t;y)$ must be carefully chosen.

The authors of \cite{chung2023diffusion} assume that the data distribution has support in a low-dimensional manifold, known as the manifold hypothesis. They exploit this property to enable conditional sampling on an unconditionally trained diffusion model, thus achieving state-of-the-art performance in inpainting, colorization, and reconstruction tasks in images.

The work of \cite{he2023manifold} builds upon this literature by using tangent spaces on the data manifold $\M$ to return improved estimates of the original clean data. This method shows state-of-the-art results in image quality and image restoration speed.

\subsection{Problem Formulation}

In this paper, we will consider nonlinear control systems of the following form
\begin{equation}
    \dot x=f(x,u), \quad x(0)=x_0,
\end{equation}
where $x\in D$ is the state of the system and $u(\cdot)$ is the feedback control input. Our goal is to derive a controller $u(\cdot)$ such that the trajectory of the system converges to the origin for all $x_0\in D$.


\section{MANIFOLD-GUIDED LYAPUNOV CONTROL (MGLC)}
Define the manifold $\M$ of asymptotically stable control vector fields as 
\begin{equation} \label{M}
    \M = \{ f(X, u(X)) : u \text{ is a stabilizing state feedback controller} \}.
\end{equation}
We build upon past works in diffusion on manifolds by constructing a manifold $\W = \{f(X,u^{(\psi)}(X)):\psi\in\Psi\}$ which is the set of vector fields with our parameterized control input function $u^{(\psi)}(\cdot)$. Therefore, we want to learn a controller $u^{(\psi)}$ such that $f(X,u^{(\psi)}(X)) \in \M \cap \W$. A schematic view is posted in Figure \ref{fig:schematic_view}, comparing DDIM \cite{song2020denoising}, MPGD \cite{he2023manifold}, and our proposed method (MGLC). 

\begin{figure}
    \centering
    \includegraphics[width=\linewidth,trim={0 4cm 3cm 1cm},clip]{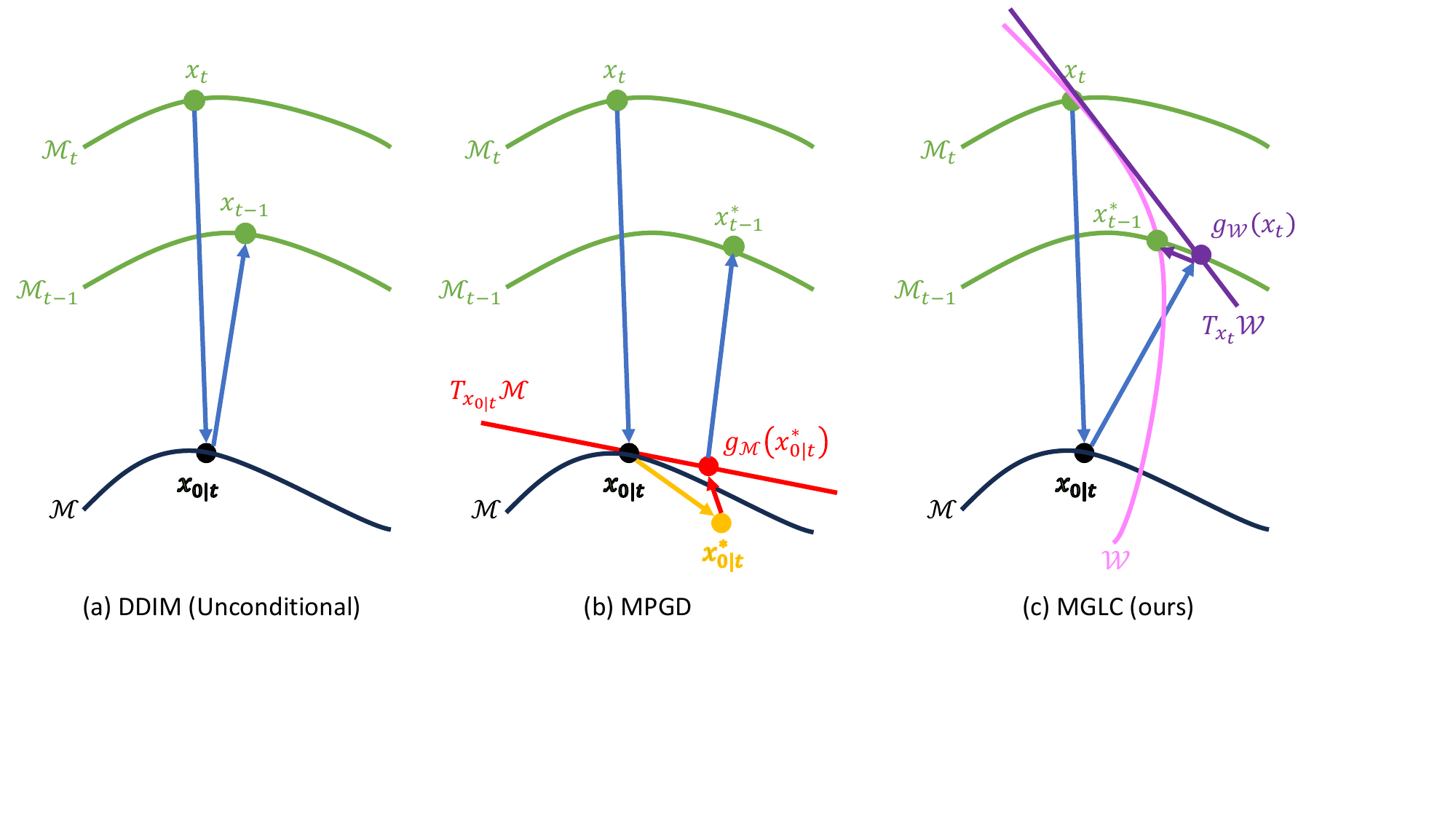}
    \caption{A schematic view of MGLC. Our proposed method introduces a new manifold $\W$ corresponding to the set of vector fields that can be returned by modifying our control input function.}
    \label{fig:schematic_view}
\end{figure}

\subsection{Proposed Method}

We introduce a manifold hypothesis for our problem that assumes our dataset of interest, that is discretized values of the set of asymptotically stable vector fields, lies on a manifold that is low-dimensional compared to the set of all functions defined in a domain containing $X$ projected onto $X$.

\begin{assumption}[Manifold Hypothesis]
    The support $\mathcal{X}$ of the set of asymptotically stable vector fields projected onto a grid $X$ lies on a $k$-dimensional manifold $\M\subset\R^d$ with $k<<d$.
\end{assumption}

Compared to \cite{he2023manifold}, we introduce a new manifold $\W$, which constrains the vector fields our model can output to the set of vector fields with all possible configurations of parameters for our control function. 

\begin{assumption}\label{as:decoder}
    The manifold $\W=\{f(X,u^{(\psi)}(X)):\psi\in\Psi\}$ contains the vector field $f$ with all possible control parameter configurations.
    We assume $\W\cap\M_t\neq\emptyset$ for all $t=0,...,T$, in other words, $\W$ intersects every manifold $\M_t$.
\end{assumption}

We also assume that the manifolds $\M_t$ and $\M_{t-1}$ are "close enough" so that a path along the tangent of $\W$ from $\M_t\cap\W$ intersects $\M_{t-1}$ close to a point in $\M_{t-1}\cap\W$.

\begin{assumption}\label{as:close_enough}
    For any time step $t=1,...,T$, the manifolds $\M_t$ and $\M_{t-1}$ are "close enough" with respect to the manifold $\W$ such that a first-order path along $\W$ is a good approximation.
    Rigorously, for every point $x_t\in\M_t\cap\W$, there exists a smooth path $\gamma:[0,1]\to\W$ such that $\gamma(0)=x_t$ and $\gamma(1)=y$, where $y$ is a point in $\M_{t-1}$ satisfying two properties:
    \begin{enumerate}
        \item The path $\gamma$ is entirely contained in the tangent space $T_{x_t}\W$. This means $y$ is contained in $T_{x_t}\W\cap\M_{t-1}$.
        \item The Euclidean distance $d(y,\M_{t-1}\cap\W)$ is less than $\epsilon$ for an arbitrarily small $\epsilon>0$.
    \end{enumerate}
\end{assumption}

We then define a notion of probabilistic concentration of noisy data on a manifold to aid in our convergence results in this paper. This was introduced in the work of \cite{chung2022improving} and then improvised in the paper by \cite{he2023manifold} to fit with the discrete-time setting of diffusion models.

\begin{proposition}[Probabilistic Concentration \cite{chung2022improving,he2023manifold}]\label{prop:prob_con}
    The distribution of noisy data $p_i(x_i)=\int p(x_i|x)p_0(x)dx$ with $p(x_i|x)\sim\N(\sqrt{\overline{\alpha}_t}x,(1-\overline{\alpha}_t)I)$ is concentrated on the $(d-1)$-dimensional manifold $\M_t:=\{y\in\R^n:d(y,\sqrt{\overline{\alpha}_t}M)=\sqrt{(1-\overline{\alpha}_t)(d-k)}$. Rigorously, for all $\epsilon>0$, there exists a $\delta>0$ such that $p_t(B_{\epsilon\sqrt{(1-\overline{\alpha}_t)(d-k)}}(\M_t))>1-\delta$.
\end{proposition}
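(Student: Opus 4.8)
The plan is to prove the statement by conditioning on a clean sample and reducing the claim to a concentration-of-measure estimate for the Gaussian noise restricted to the normal directions of $\M$. Fix a clean point $x$ in the support $\mathcal{X}\subset\M$, and recall from the forward process \eqref{eq:forward} that the conditional law $p(x_t\mid x)=\N(\sqrt{\overline{\alpha}_t}x,(1-\overline{\alpha}_t)I)$ produces samples $x_t=\sqrt{\overline{\alpha}_t}x+\sqrt{1-\overline{\alpha}_t}\epsilon$ with $\epsilon\sim\N(0,I_d)$. The quantity to control is the distance to the scaled manifold, $d(x_t,\sqrt{\overline{\alpha}_t}\M)=\inf_{x'\in\M}\|x_t-\sqrt{\overline{\alpha}_t}x'\|$, and the key geometric observation is that the $k$ tangential directions of $\M$ at $x$ can absorb the tangential part of the noise, so that only the $(d-k)$-dimensional normal component contributes to this distance; this is precisely what turns the naive $\sqrt{d}$ scaling into the claimed $\sqrt{d-k}$ scaling.

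First I would linearize $\M$ at $x$: writing $T_x\M$ for the $k$-dimensional tangent space and $P_\perp$ for the orthogonal projection onto its $(d-k)$-dimensional complement, local flatness of $\M$ gives $d(x_t,\sqrt{\overline{\alpha}_t}\M)=\sqrt{1-\overline{\alpha}_t}\,\|P_\perp\epsilon\|+O(\kappa)$, where $O(\kappa)$ collects the curvature contributions of $\M$ near $x$. Since $P_\perp\epsilon$ is a standard Gaussian on a $(d-k)$-dimensional subspace, $\|P_\perp\epsilon\|^2$ follows a $\chi^2_{d-k}$ law, and by the Gaussian annulus theorem (equivalently, $1$-Lipschitz Gaussian concentration of $\epsilon\mapsto\|P_\perp\epsilon\|$ about its mean $\sqrt{d-k}+O(1/\sqrt{d-k})$) we obtain $\Pr[\,|\,\|P_\perp\epsilon\|-\sqrt{d-k}\,|\ge s\,]\le 2e^{-s^2/2}$. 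Taking $s=\tfrac{\epsilon}{2}\sqrt{d-k}$ forces the relative deviation below $\epsilon$, i.e. $|d(x_t,\sqrt{\overline{\alpha}_t}\M)-\sqrt{(1-\overline{\alpha}_t)(d-k)}|\le \epsilon\sqrt{(1-\overline{\alpha}_t)(d-k)}$, on all but an event of probability at most $2e^{-\epsilon^2(d-k)/8}$ (up to the curvature remainder). This is exactly the membership $x_t\in B_{\epsilon\sqrt{(1-\overline{\alpha}_t)(d-k)}}(\M_t)$.

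To conclude I would remove the conditioning: since the bound holds uniformly over $x\in\mathcal{X}$ (the normal dimension $d-k$ is constant along $\M$), integrating against $p_0$ preserves it and yields $p_t(B_{\epsilon\sqrt{(1-\overline{\alpha}_t)(d-k)}}(\M_t))\ge 1-\delta$ with $\delta:=2e^{-\epsilon^2(d-k)/8}$. I expect the main obstacle to be making the linearization rigorous: the noise displacement has magnitude $\sqrt{(1-\overline{\alpha}_t)(d-k)}$, which need not be small relative to the reach of $\M$, so the curvature remainder $O(\kappa)$ must be shown to be of lower order than the tolerance $\epsilon\sqrt{(1-\overline{\alpha}_t)(d-k)}$. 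The clean way to handle this is either to assume bounded reach and work in the large-$d$ regime, where the $\sqrt{d-k}$ concentration width dominates the curvature correction, or to adopt the asymptotic reading of \cite{chung2022improving,he2023manifold}, in which the tangent-space approximation is taken as the definition of the linearized shell $\M_t$; the remaining content is then exactly the Gaussian concentration estimate above.
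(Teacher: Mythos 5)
The paper offers no proof of this proposition---it is imported verbatim by citation from \cite{chung2022improving,he2023manifold}---so the only meaningful comparison is with the argument in those references, and yours is essentially the same one: condition on a clean $x$, split the Gaussian noise into the $k$ tangential directions (absorbed by moving along $\M$) and the $d-k$ normal directions, and apply the Gaussian annulus / $\chi^2_{d-k}$ concentration to get deviation probability $2e^{-\epsilon^2(d-k)/8}$, then integrate over $p_0$. Your closing caveat is the right one to flag: the linearization error relative to the reach of $\M$, and the fact that the stated ``for all $\epsilon$ there exists $\delta$'' form is only meaningful in the large-$(d-k)$ regime where your $\delta$ is actually small, are both glossed over in the cited sources as well, so your proposal is as rigorous as the result it is reproducing.
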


We compute the estimate of $x_0$ conditioned on $x_t$, $x_{0|t}:=E[x_0|x_t]$ by computing the Tweedie's estimate
\begin{equation}
    x_{0|t}=\frac{1}{\sqrt{\overline{\alpha}_t}}(x_t-\sqrt{1-\overline{\alpha}_t}\epsilon_\theta(x_t,t))\label{eq:update1}
\end{equation}
We then update the parameters $\psi$ using the following gradient update
\begin{equation} 
    \psi_{t-1}=\psi_t-c_t\nabla_\psi L_t(x_t;\psi)\label{eq:update2},
\end{equation}
with some loss function $L_t(x_t;\psi)$. Note that Tweedie's estimate is used in the computation of the loss function as described in Algorithm \ref{alg:mglc}. Finally after updating the parameters $\psi$, we can estimate $x_{t-1}$:
\begin{align}
    x_{t-1}&=f(X,u^{\psi_{t-1}}(X))\label{eq:update3}.
\end{align}

\subsection{Convergence Proofs}

We introduce the following theorems that highlight our main theoretical contribution in this paper. They compute the loss function $L_t(x_t;\psi_t)$ and step size $c_t$ and provides convergence guarantees in the sense that the vector field $f(X, u^{\psi_{0}}(X))$ returned by our proposed methodology has marginal distribution concentrated about $\W \cap \M$ and therefore $u^{\psi_{0}}(X)$ is a stabilizing controller for the system. We first introduce a lemma that explains the choice of $L_t(x_t;\psi)$ and $c_t$. 

\begin{proposition} \label{prop:formulas}
    Fix $t$ and assume $f(X, u^{\psi_{t}}(X)) \in \M_t$. Then with loss function 
    \begin{align}\label{eq:L_t}
        L_t(x_t;\psi)&=\left\|f(X,u^{(\psi_{t})}(X))-\frac{\sqrt{\overline{\alpha}_{t-1}}-\frac{\sqrt{1-\overline{\alpha}_{t-1}}\sqrt{\overline{\alpha}_{t}}}{\sqrt{1-\overline{\alpha}_{t}}}}{1-\frac{\sqrt{1-\overline{\alpha}_{t-1}}}{\sqrt{1-\overline{\alpha}_{t}}}}x_{0|t}\right\|^2 \nonumber \\
        &=: \left \| r_t \right \|^2,
    \end{align}
    where we assume that $r_t$ lies on the tangent space of $\W$ at the point $f(X, u^{\psi_{t}}(X))$,
    and step size 
    \begin{equation}\label{eq:c_t}
        c_t= (1-\frac{\sqrt{1-\overline{\alpha}_{t-1}}}{\sqrt{1-\overline{\alpha}_{t}}})
    \end{equation}
    we have that there exists $x_{t-1}$ probabilistically concentrated in $\M_{t-1}$ such that
    \begin{equation}
        x_{t-1} = f(X, u^{\psi_{t}}(X)) - c_t r_t.
    \end{equation}
\end{proposition}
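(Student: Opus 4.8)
The plan is to reduce the statement to a single algebraic identity followed by one application of Proposition \ref{prop:prob_con}. The quantity the proposition calls $x_{t-1}$ is, in the notation of Section \ref{sec:diffusion}, nothing other than the standard DDIM reverse step \eqref{eq:DDIM}; so the entire content is (i) to show that this DDIM step, once the noise estimate $\epsilon_\theta(x_t,t)$ is eliminated in favour of $x_t = f(X,u^{(\psi_t)}(X))$ and the Tweedie estimate $x_{0|t}$, takes exactly the affine form $x_t - c_t r_t$ with $c_t$ and $r_t$ as in \eqref{eq:c_t} and \eqref{eq:L_t}, and (ii) to argue that the resulting point is probabilistically concentrated on $\M_{t-1}$.

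First I would eliminate $\epsilon_\theta$. Solving Tweedie's formula \eqref{eq:update1} for the noise estimate gives $\epsilon_\theta(x_t,t) = (x_t - \sqrt{\overline{\alpha}_t}\,x_{0|t})/\sqrt{1-\overline{\alpha}_t}$. Substituting this into the DDIM update \eqref{eq:DDIM} and collecting the $x_t$ and $x_{0|t}$ terms yields
\begin{equation}
    x_{t-1} = \frac{\sqrt{1-\overline{\alpha}_{t-1}}}{\sqrt{1-\overline{\alpha}_t}}\,x_t + \left(\sqrt{\overline{\alpha}_{t-1}} - \frac{\sqrt{1-\overline{\alpha}_{t-1}}\sqrt{\overline{\alpha}_t}}{\sqrt{1-\overline{\alpha}_t}}\right) x_{0|t}.
\end{equation}
Writing $a_t := \sqrt{1-\overline{\alpha}_{t-1}}/\sqrt{1-\overline{\alpha}_t}$ for the coefficient of $x_t$, I observe that $c_t = 1 - a_t$ is precisely the step size in \eqref{eq:c_t}. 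Factoring $c_t$ out of the remaining terms recasts the display as $x_{t-1} = x_t - c_t\bigl(x_t - (b_t/c_t)x_{0|t}\bigr)$, where $b_t$ denotes the bracketed coefficient of $x_{0|t}$; since $b_t/c_t$ is exactly the scalar multiplying $x_{0|t}$ in the definition of $r_t$ in \eqref{eq:L_t}, the parenthesized expression equals $r_t$, and we obtain $x_{t-1} = x_t - c_t r_t$. This part is purely mechanical, so I would keep it short and merely verify that the two fractions coincide with the coefficients in \eqref{eq:c_t} and \eqref{eq:L_t}.

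It then remains to justify the concentration claim. Here I would use that the affine expression just derived is identically the DDIM reverse step, which, assuming $\epsilon_\theta$ approximates the score $\nabla_{x_t}\log p_t$ well, transports a sample with forward marginal $p_t$ to one with marginal $p_{t-1}$. Invoking Proposition \ref{prop:prob_con} at level $t-1$ gives, for every $\epsilon>0$, a $\delta>0$ with $p_{t-1}\bigl(B_{\epsilon\sqrt{(1-\overline{\alpha}_{t-1})(d-k)}}(\M_{t-1})\bigr) > 1-\delta$, i.e. $x_{t-1}$ is concentrated on $\M_{t-1}$. The hypothesis $f(X,u^{(\psi_t)}(X))\in\M_t$ and the assumption that $r_t$ lies in $T_{x_t}\W$ are what make this move realizable by a tangent step along $\W$ in the subsequent parameter update \eqref{eq:update2}, but they are not needed for the identity itself.

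The main obstacle I anticipate is not the algebra but the concentration argument: the step $x_t - c_t r_t$ is a deterministic recasting, whereas the conclusion is probabilistic, so the delicate point is to argue that the DDIM recursion actually carries the marginal $p_t$ to $p_{t-1}$ (rather than merely producing a single point), which is what licenses Proposition \ref{prop:prob_con}. I would make this precise by stating the approximation hypothesis on $\epsilon_\theta$ explicitly, and by noting that the concentration radius $\sqrt{(1-\overline{\alpha}_{t-1})(d-k)}$ shrinks as $t$ decreases, so that the successive manifolds $\M_{t-1}$ tighten around $\M$.
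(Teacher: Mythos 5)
Your proof is correct, and for the central algebraic identity it travels the same road as the paper: eliminate $\epsilon_\theta$ via Tweedie's formula, substitute into the DDIM step to obtain the affine expression in $x_t$ and $x_{0|t}$ (the paper's equation \eqref{connection}), and regroup as $x_t - c_t r_t$. Where you diverge is in what you do \emph{not} prove. The paper does not stop at the identity: it expands $f(X,u(X)) = f(X,u^{(\psi_t)}(X)) - \nabla_\psi f(X,u^{(\psi_t)}(X))\,h$ to first order in the parameters, chooses $h$ via the pseudo-inverse of $\nabla_\psi f$, and uses the hypothesis that $r_t$ lies in $T_{x_t}\W$ to show the associated projection matrix acts as the identity on $r_t$, so that $\nabla_\psi f\, h = c_t r_t$. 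That detour is where the tangent-space assumption is actually consumed; its payoff is not the identity itself but the realizability of the shift by a first-order update in $\psi$, which is what Remark \ref{rem} and Theorem \ref{thm:convergence} lean on. You correctly observe that the proposition as literally stated --- existence of an $x_{t-1}$ concentrated on $\M_{t-1}$ with $x_{t-1} = x_t - c_t r_t$ --- does not need that machinery, and your direct factoring is cleaner for that purpose; just be aware that a referee reading the downstream results would want the projection argument recorded somewhere. On the concentration step, you are in fact more careful than the paper, which simply asserts that the DDIM recursion lands on $\M_{t-1}$ with high probability; your explicit framing (the deterministic recasting must be paired with the claim that DDIM transports the marginal $p_t$ to $p_{t-1}$ under an accuracy hypothesis on $\epsilon_\theta$, after which Proposition \ref{prop:prob_con} applies at level $t-1$) names the genuinely delicate point that both proofs ultimately take on faith.
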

\begin{proof}
    By the assumption that $x_t:=f(X,u^{(\psi_{t})}(X))\in\M_{t}$, this vector field lies on the manifold of the diffusion model at time $t$ and therefore by Proposition \ref{prop:prob_con}, there exists a scaling constant $\overline{\alpha}_t$ and an asymptotically stable vector field $x \in \M$ such that $x_t=\sqrt{\overline{\alpha}_t}x+\sqrt{1-\overline{\alpha}_t}\epsilon$. As the diffusion model is given by the DDIM algorithm as stated in \eqref{eq:Tweedie} and \eqref{eq:DDIM}, this states that there exists an asymptotically stable vector field $x_{t-1} \in \M_{t-1}$, concentrated on $\M_{t-1}$ with high probability, such that 
    \begin{equation}
        x_{t-1}=\sqrt{\overline{\alpha}_{t-1}}x_{0|t}+\sqrt{1-\overline{\alpha}_{t-1}}\frac{x_t-\sqrt{\overline{\alpha}_{t}}x_{0|t}}{\sqrt{1-\overline{\alpha}_{t}}}.
    \end{equation}
    Substituting $x_{t}=f(X,u^{(\psi_{t})}(X))$ and $x_{t-1}=f(X,u(X))$ gives
    \begin{align} \label{connection}
        f(X,u(X))=&\frac{\sqrt{1-\overline{\alpha}_{t-1}}}{\sqrt{1-\overline{\alpha}_{t}}}f(X,u^{(\psi_{t})}(X))\nonumber\\
        &+\left(\sqrt{\overline{\alpha}_{t-1}}-\frac{\sqrt{1-\overline{\alpha}_{t-1}}\sqrt{\overline{\alpha}_{t}}}{\sqrt{1-\overline{\alpha}_{t}}}\right)x_{0|t}
    \end{align}
    By a first-order approximation, 
    \begin{equation} \label{eqn:h}
        f(X,u(X))=f(X,u^{(\psi_{t})}(X))-\nabla_\psi f(X,u^{(\psi_{t})}(X))h
    \end{equation}
    for some vector $h \in \R^n$. Set
    \begin{align} \label{h}
        -h
        =&[\nabla_\psi f(X,u^{(\psi_{t})}(X))\nabla_\psi f(X,u^{(\psi_{t})}(X))^T]^{-1}\\
        &\nabla_\psi f(X,u^{(\psi_{t})}(X))^T\\
        &\Big[
        (1-\frac{\sqrt{1-\overline{\alpha}_{t-1}}}{\sqrt{1-\overline{\alpha}_{t}}})f(X,u^{(\psi_{t})}(X))\nonumber\\
        &+\left(\sqrt{\overline{\alpha}_{t-1}}-\frac{\sqrt{1-\overline{\alpha}_{t-1}}\sqrt{\overline{\alpha}_{t}}}{\sqrt{1-\overline{\alpha}_{t}}}\right)x_{0|t}
        \Big]. \label{eq:square_bracket_term}
    \end{align}
    Since $f(X,u^{(\psi_{t})}(X)) [\nabla_\psi f(X,u^{(\psi_{t})}(X))\nabla_\psi f(X,u^{(\psi_{t})}(X))^T]^{-1}$ is the matrix representation of the projection onto the column space of $f(X,u^{(\psi_{t})}(X))$ this is the projection onto the tangent space of $\W$ at the point $f(X, u^{\psi_{t}}(X))$. Therefore, as $r_t$ is assumed to lie on the tangent space, the projection matrix
    \begin{align}
        P=&\nabla_\psi f(X,u^{(\psi_{t})}(X))\nonumber\\
        &[\nabla_\psi f(X,u^{(\psi_{t})}(X))\nabla_\psi f(X,u^{(\psi_{t})}(X))^T]^{-1}\nonumber\\
        &\nabla_\psi f(X,u^{(\psi_{t})}(X))^T
    \end{align} acts as the identity matrix on $r_t$. It follows that
    \begin{equation}
        \nabla_\psi f(X,u^{(\psi_{t})}(X))h = (1-\frac{\sqrt{1-\overline{\alpha}_{t-1}}}{\sqrt{1-\overline{\alpha}_{t}}})r_t
    \end{equation}  
    and (\ref{connection}) is satisfied for this choice. Therefore, substituting and simplifying gives
    \begin{equation}
        f(X, u(X)) = f(X, u^{(\psi_t)}(X)) - c_t r_t 
    \end{equation}
\end{proof}



    

\begin{remark} \label{rem}
    Note that since $r_t$ lies on the tangent plane we see that $x_{t-1}$ lies in $T_{x_t}\W \cap \M_{t-1}$. Therefore, by Assumption \ref{as:close_enough} there exists parameters $\psi_{t-1}$ such that $f(x,u^{\psi_{t-1}}(X))$ lies in an $\epsilon$ neighborhood of $f(X,u(X))$ obtained from Proposition \ref{prop:formulas}. Intuitively, this shows that updating parameters in \eqref{eq:update2} computes the difference in parameters corresponding to a linear shift from $\M_{t} \cap \W$ to $\M_{t-1}$. 
\end{remark}
\begin{theorem}\label{thm:convergence}
    Let $\M$ be the set of asymptotically stable vector fields projected onto a grid. Let $c_t$ be defined as in equation \eqref{eq:c_t}. Assume the diffusion model $\epsilon_\theta$ is optimal.
    With the update rule \eqref{eq:update1}--\eqref{eq:update3} with $L_t(x_t;\psi_t)$ defined as in equation \eqref{eq:L_t}, we can obtain an $x_{t-1}\sim N(x_{t-1};f(X,u^{\psi_{t-1}}(X)),\sigma_t^2I)$ whose marginal distribution
    \begin{equation}
        p(x_{t-1})=\int\N(x_{t-1};f(X,u^{\psi_{t-1}}(X)),\sigma_t^2I)p(x_t|x)p(x)dxdx_t
    \end{equation}
    is probabilistically concentrated on $\M_{t-1}\cap\W$
\end{theorem}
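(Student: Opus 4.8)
The plan is to establish the claim as a single reverse-diffusion step $t \to t-1$ and to promote the deterministic, mean-level conclusion of Proposition \ref{prop:formulas} to the stated probabilistic concentration of the marginal $p(x_{t-1})$ on $\M_{t-1}\cap\W$. First I would record the inductive hypothesis that $x_t = f(X,u^{\psi_t}(X))\in\M_t$; since $x_t$ is a parameterized vector field it automatically lies in $\W$, so in fact $x_t\in\M_t\cap\W$, which is nonempty by Assumption \ref{as:decoder}, and the base case $t=T$ is furnished directly by the manifold hypothesis and Proposition \ref{prop:prob_con} (pure noise concentrates on $\M_T$). This verifies the hypotheses of Proposition \ref{prop:formulas}.

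Next I would feed the prescribed loss \eqref{eq:L_t} and step size \eqref{eq:c_t} into Proposition \ref{prop:formulas} to obtain the point $f(X,u(X)) = f(X,u^{\psi_t}(X)) - c_t r_t$, which is exactly the DDIM iterate \eqref{eq:DDIM} rewritten through Tweedie's estimate and is therefore probabilistically concentrated in $\M_{t-1}$ by Proposition \ref{prop:prob_con}. Because $r_t$ lies in $T_{x_t}\W$, the correction $-c_t r_t$ keeps this point inside the tangent space, so $f(X,u(X))\in T_{x_t}\W\cap\M_{t-1}$. I would then invoke Remark \ref{rem}: the gradient update \eqref{eq:update2} reproduces, to first order, precisely this tangent shift, so the updated controller satisfies $f(X,u^{\psi_{t-1}}(X))\in\W$ and lies in an $\epsilon$-ball about $f(X,u(X))$; Assumption \ref{as:close_enough}(2) supplies $d(f(X,u(X)),\M_{t-1}\cap\W)<\epsilon$, and the triangle inequality then places the mean $f(X,u^{\psi_{t-1}}(X))$ within $2\epsilon$ of $\M_{t-1}\cap\W$.

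The final and most delicate step is to convert this statement about the Gaussian mean into concentration of the full marginal $p(x_{t-1}) = \int\N(x_{t-1};f(X,u^{\psi_{t-1}}(X)),\sigma_t^2 I)\,p(x_t|x)p(x)\,dx\,dx_t$. I would view $p(x_{t-1})$ as a mixture of Gaussians whose centers were just shown to be $O(\epsilon)$-close to $\M_{t-1}\cap\W$, and apply a Gaussian annulus (chi-squared shell) estimate, the same device underlying Proposition \ref{prop:prob_con}, to conclude that with probability at least $1-\delta$ each draw lies in a thin shell of radius on the order of $\sigma_t\sqrt{d}$ about its center, hence within $O(\epsilon)+O(\sigma_t\sqrt{d})$ of the intersection. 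Choosing $\sigma_t$ to match the concentration scale $\sqrt{(1-\overline{\alpha}_{t-1})(d-k)}$ of Proposition \ref{prop:prob_con} then yields $p\big(B_{\epsilon\sqrt{(1-\overline{\alpha}_{t-1})(d-k)}}(\M_{t-1}\cap\W)\big) > 1-\delta$, which is the asserted concentration.

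I expect the main obstacle to be exactly this last passage from concentration on $\M_{t-1}$ to concentration on the intersection $\M_{t-1}\cap\W$: Proposition \ref{prop:prob_con} only localizes the DDIM iterate on $\M_{t-1}$, and all the work of transferring this to $\M_{t-1}\cap\W$ is carried by the first-order tangent construction together with Assumption \ref{as:close_enough}. I would therefore take particular care that the $\epsilon$ of Assumption \ref{as:close_enough} can be chosen uniformly small relative to $\sigma_t$, and that the first-order approximation error incurred in Remark \ref{rem} stays below the tolerated failure probability $\delta$, so that the two $\epsilon$-neighborhoods and the Gaussian shell combine without the error budget blowing up.
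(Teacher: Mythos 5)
Your proposal follows essentially the same route as the paper's proof: induction over the reverse-diffusion steps with the base case at $t=T$ supplied by the Gaussian prior ($\overline{\alpha}_T=0$), the inductive step carried by Proposition \ref{prop:formulas} together with Remark \ref{rem}, and the final localization obtained from Proposition \ref{prop:prob_con}. If anything, your handling of the last passage --- promoting the mean-level statement to concentration of the full marginal via a Gaussian-shell estimate and tracking how the $\epsilon$ of Assumption \ref{as:close_enough} interacts with $\sigma_t$ --- is more explicit than the paper's own proof, which leaves that conversion implicit.
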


\begin{proof}
    We first acknowledge that there is a one-to-one correspondence between $x_t$ and $\psi_t$ for all $t=0,...,T$, thus showing that $p(x_t)=p(\psi_t)$. We also acknowledge that $x_t\in\W$ for all $t$ since the update rule $x_t=f(X,u^{(\psi_t)}(X))$ forces this property on $x_t$.
    
    We then prove that for all $t$, there exists an $x\in\M\cap\W$ such that the $x_t$ generated from equations \eqref{eq:update1}--\eqref{eq:update3} can also be generated by the forward process of the diffusion model constrained to the $\W$ manifold. In other words, $\psi_t=\sqrt{\overline{\alpha}_t}\psi+\sqrt{1-\overline{\alpha}_t}\epsilon$ for some $\psi\in\W,\epsilon\in\N(0,I)$, where $x=f(X,u^{(\psi)}(X))$. We prove this using induction.

    For the base case, let $t=T$. Since the parameters $\psi_T$ are drawn from a Gaussian prior, $\psi_T=\sqrt{\overline{\alpha}_T}\psi+\sqrt{1-\overline{\alpha}_T}\epsilon=\epsilon$ for any $\psi\in\M$ since $\overline{\alpha}_T=0$.

    The proof for $t\leq T$, has been done in Proposition \ref{prop:formulas} and Remark \ref{rem}. We are guaranteed the existence of parameters $\phi_t$ for any $t \leq T$ such that $f(X, u^{\phi_t}(X))$ lies in an $\epsilon$ neighborhood of some $x_t \in \M_t$.This completes the proof by induction. 



    
\end{proof}

Proposition \ref{prop:formulas} and Theorem \ref{thm:convergence} allow us to propose a methodology for updating the parameters $\psi$ to design a stabilizing controller.

\subsection{Algorithm}

We provided our proposed algorithm in Algorithm \ref{alg:mglc} below. This algorithm returns the stable vector field $x_0$ and the stabilizing controller $\psi_0$. This algorithm will be tested in four baselines to derive stabilizing controllers.

\begin{algorithm}[H]
	\caption{Manifold Guided Lyapunov Control (MGLC)}\label{alg:mglc} 
	\begin{algorithmic}[1]
		\REQUIRE  $f(\cdot,u(\cdot))$, Pretrained diffusion model $\epsilon_\theta(\cdot,\cdot)$
            \STATE Initialize $\psi_T\sim\N(0,I)$
		\FOR{t=T,...,1}
                \STATE Set $x_t=f(X,u^{(\psi_t)})$
                \STATE Estimate $x_{0|t}$ using \eqref{eq:Tweedie}
    		\STATE Compute $L_t(x_t;\psi_t)$ and $c_t$ using \eqref{eq:L_t} and \eqref{eq:c_t} respectively
    		\STATE Update $\psi_{t-1}=\psi_t-c_t\nabla_{\psi_t}L_t(x_t;\psi_t)$
            \ENDFOR
            \RETURN $x_0=f(X,u^{(\psi_0)})$ and $\psi_{0}$
	\end{algorithmic}
\end{algorithm}

\section{IMPLEMENTATION}

In this section, we explain the dataset and the unconditional diffusion model we trained for this paper. To ensure the reproducibility of our results, we provided our source code in the following GitHub repository: \href{https://github.com/amartyamukherjee/minimal-diffusion}{https://github.com/amartyamukherjee/minimal-diffusion}

An overview of our model is posted in Figure \ref{fig:diagram}. We train the diffusion model to output images with three channels. The first two channels correspond to $f_1(X)$ and $f_2(X)$, the two scalar elements of an asymptotically stable vector field. The third channel corresponds to $V(X)$, the Lyapunov function that verifies their stability.

\subsection{Dataset Generation}

For our diffusion model, we made a dataset of 2,000 pairs of $(f,V)$ in 2D space. The first 1,000 pairs we made have the following format:
\begin{align}
    f(x)&=Ax+\beta_f^T\tanh{(W_fx)}\\
    V(x)&=\beta_V\cdot\tanh{(W_Vx+b_V)},
\end{align}
where $A$ is a $2\times 2$ Hurwitz matrix. Each entry of $\beta_f\in\R^{2\times 20}$ is sampled from $\N(0,0.2^2)$, and each entry of $W_f\in\R^{20\times 2}$ is sampled from $\N(0,1)$. $V(x)$ is a neural Lyapunov function trained identically to the method in \cite{zhou2022neural} to minimize the loss $(\dot V+\|x\|^2)$.

For the next 1,000 pairs, we focus on second-order systems. We use the fact stated below.
\begin{fact}[Example 4.8 of \cite{khalil2002nonlinear}]
    Consider the system
    \begin{align}
        \dot{x}_1&=x_2\\
        \dot{x}_2&=-h_1(x_1)-h_2(x_2),
    \end{align}
    where $h_1(\cdot)$ and $h_2(\cdot)$ are locally Lipschitz and satisfy $h_i(0)=0,yh_i(y)>0,\forall y\in(-a,a)\backslash\{0\}$. This system is asymptotically stable with the following Lyapunov function candidate
    \begin{equation}
        V(x_1,x_2)=\int_0^{x_1}h_1(y)dy+\frac{1}{2}x_2^2,
    \end{equation}
    satisfying
    \begin{equation}
        \dot V=-x_2h_2(x_2).
    \end{equation}
\end{fact}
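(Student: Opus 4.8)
The plan is to run the standard Lyapunov argument in three stages: confirm that the proposed $V$ is positive definite near the origin, compute its Lie derivative along the dynamics to recover the stated formula, and then upgrade the resulting negative \emph{semi}definiteness to genuine asymptotic stability through an invariance argument.

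First I would check positive definiteness of $V$. Clearly $V(0,0)=0$, and the quadratic term $\tfrac{1}{2}x_2^2$ is positive whenever $x_2\neq 0$. For the integral term, the sign condition $yh_1(y)>0$ on $(-a,a)\setminus\{0\}$ forces $h_1(y)$ to share the sign of $y$, so the integrand $h_1(y)$ is positive on $(0,x_1)$ when $x_1>0$ and negative on $(x_1,0)$ when $x_1<0$; in either case $\int_0^{x_1}h_1(y)\,dy>0$ for $0<|x_1|<a$. Hence $V$ is positive definite on the box $\{\,|x_1|<a,\ x_2\in\R\,\}$.

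Second I would compute $\dot V$. By the fundamental theorem of calculus, $\partial V/\partial x_1 = h_1(x_1)$ and $\partial V/\partial x_2 = x_2$, so
\begin{equation}
    \dot V = h_1(x_1)\,x_2 + x_2\bigl(-h_1(x_1)-h_2(x_2)\bigr) = -x_2 h_2(x_2),
\end{equation}
which is exactly the claimed expression. The sign condition on $h_2$ gives $x_2 h_2(x_2)>0$ for $x_2\neq 0$, so $\dot V\le 0$, with equality precisely on the line $\{x_2=0\}$.

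The hard part will be that $\dot V$ is only negative \emph{semi}definite: it vanishes along the entire axis $x_2=0$, not just at the origin, so the elementary Lyapunov theorem quoted earlier yields stability but not asymptotic stability. To close this gap I would invoke LaSalle's invariance principle. On the set $E=\{\dot V=0\}=\{x_2=0\}$, any trajectory that remains in $E$ satisfies $x_2\equiv 0$, hence $\dot x_2\equiv 0$; substituting into the second equation and using $h_2(0)=0$ forces $h_1(x_1)=0$, and the sign condition $yh_1(y)>0$ then forces $x_1=0$. Thus the largest invariant set contained in $E$ is the singleton $\{(0,0)\}$, and LaSalle's principle upgrades the conclusion to asymptotic stability of the origin. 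The only technical caveat I would record is that the argument should be carried out on a compact sublevel set of $V$ lying inside the box $|x_1|<a$, so that this set is positively invariant and the invariance principle applies cleanly.
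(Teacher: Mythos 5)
Your proposal is correct and is essentially the argument of the cited source (Khalil, Example 4.8), which the paper itself does not reproduce: positive definiteness from the sign condition on $h_1$, the computation $\dot V=-x_2h_2(x_2)$, and LaSalle's invariance principle on a compact sublevel set to handle the fact that $\dot V$ vanishes on the whole line $\{x_2=0\}$. You are also right to flag that the paper's own Theorem~1, which demands strict negativity of $\dot V$ away from the origin, does not apply directly here, so the invariance-principle step is genuinely necessary rather than optional polish.
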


Using this fact, we can design the following system:
\begin{align}
    \dot x_1&=x_2\\
    \dot x_2&=-c_1x_1-c_2\tanh{x_1}-c_3x_2-c_4\tanh{x_2}\\
    V(x_1,x_2)&=\frac{c_1}{2}x_1^2+c_2\log(\cosh{x_1})+\frac{1}{2}x_2^2,
\end{align}
where $c_i$ is drawn from the uniform distribution $U(0,5)$. This ensures that $\dot V=-x_2(c_3x_2+c_4\tanh{x_2})\le -c_3x_2^2$.

\subsection{Diffusion model}\label{sec:diffusion_PDEgen}

\begin{figure}
    \centering
    \includegraphics[width=\linewidth]{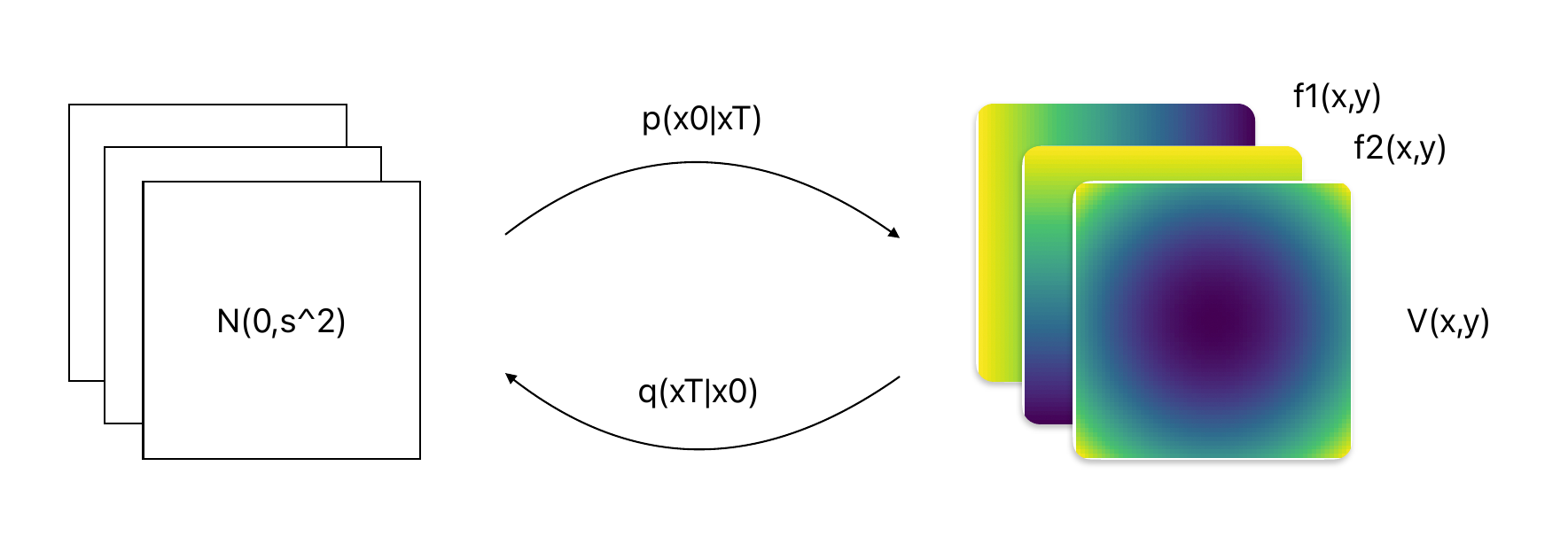}
    \caption{An overview of our model in the 2D control setting. We train the generator to output images with three channels. The first two channels are the scalar elements of an asymptotically stable vector field, $f_1(x,y)$ and $f_2(x,y)$. The third channel is a Lyapunov function $V(x,y)$ that verifies the stability of the vector field.}
    \label{fig:diagram}
\end{figure}

We trained a diffusion model by modifying the GitHub repository by \cite{vsehwagGithub} that is based on DDIM. This model involves a forward (or "diffusion process") that is a Markov chain, which gradually adds Gaussian noise to the data given by a cosine scheduler. Upon training our diffusion model, we needed to normalize our vector fields and Lyapunov function so that each pixel has values in $[-1,1]$. This means that our model is trained with Lyapunov functions that satisfy $\dot V=c\|x\|^2$ for the first 1,000 data points and $\dot V<cx_2^2$ for the second 1,000 data points, where $c$ is a strictly positive constant.

We trained our diffusion model in parallel using two NVIDIA 3090Ti GPUs for our dataset of 2,000 samples and the training time was approximately 8 hours.

\subsection{Controller}

The format of the controller is
\begin{equation}
    u^{(\psi)}(x)=C\sum_{i=1}^n\tanh(\psi_ix_i),
\end{equation}
where $\psi_i,i=1,...,n$ are trainable parameters. At every step of the reverse diffusion process, we update the parameters $\psi$ using the loss function defined in equation \eqref{eq:L_t} and the step size in equation \eqref{eq:c_t}.

\section{NUMERICAL RESULTS}


\begin{figure*}[ht]
    \centering
    \includegraphics[width=\linewidth]{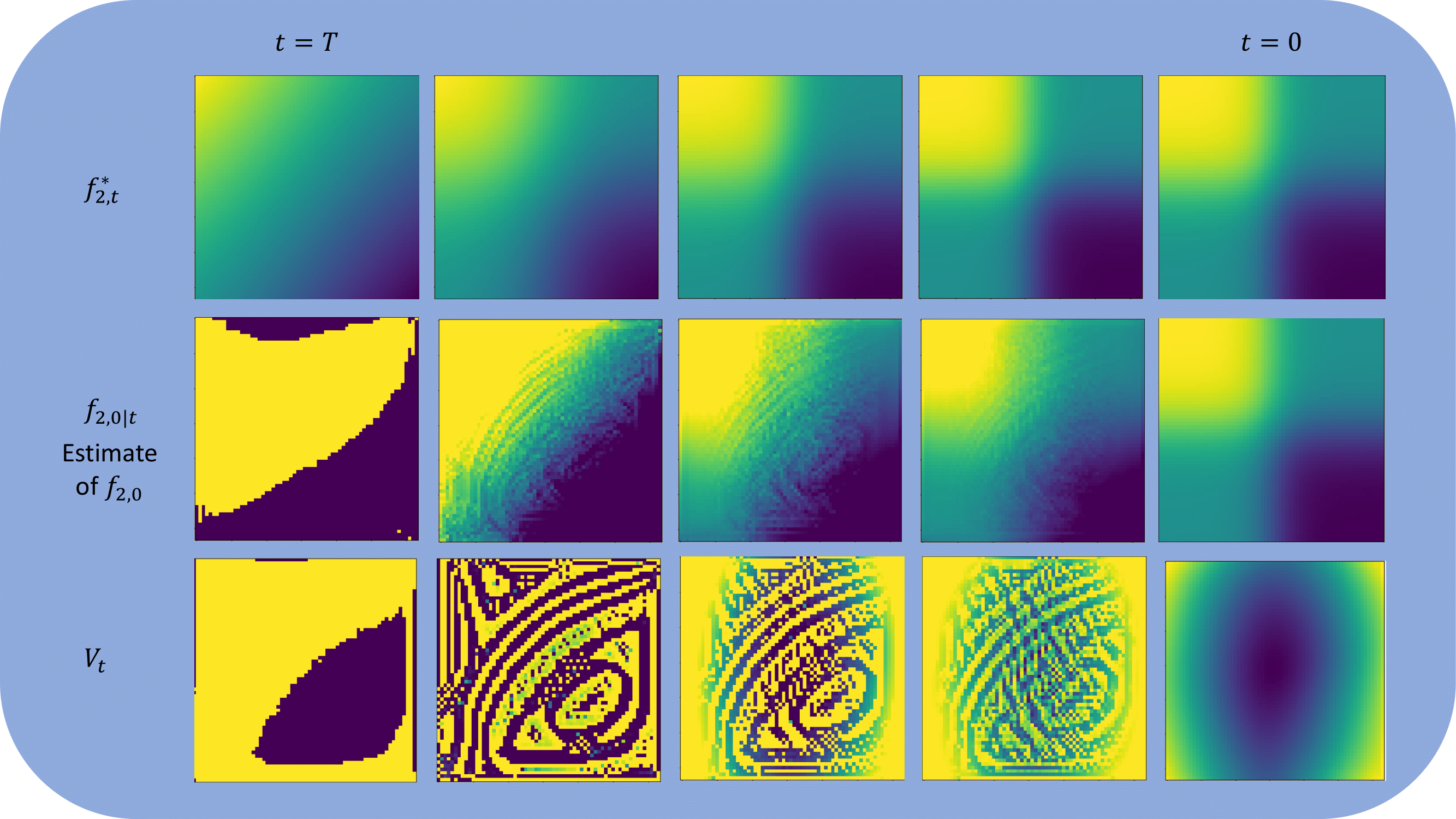}
    \caption{Visualization of MGLC for the inverted pendulum environment}
    \label{fig:MGLC_Visualization}
\end{figure*}


We finally used this pre-trained model to derive stabilizing controllers for four unseen nonlinear systems. We notice that the derivation of a controller took only 16 seconds.

\subsection{Inverted Pendulum}

The inverted pendulum is governed by the following dynamics:
\begin{equation}\label{eq:pendulum}
    \ddot\theta=\frac{mgl\sin\theta+u-0.1\dot\theta}{ml^2},
\end{equation}
where our constants are $g=9.81,m=0.15,l=0.5$. Using our MGLC algorithm, our control function is
\begin{equation}
    u(x_1,x_2)=20\tanh(-4.16928\theta) + 20\tanh(-3.14848\dot\theta).
\end{equation}
To verify the stability of this system, we solved it numerically with 100 different initial conditions using RK45 and plotted the trajectories in Figure \ref{fig:pendulum}. The plot shows that all the trajectories converged to the origin, thus showing that our control function is stabilizing.
\begin{figure}[H]
    \centering
    \includegraphics[width=\linewidth]{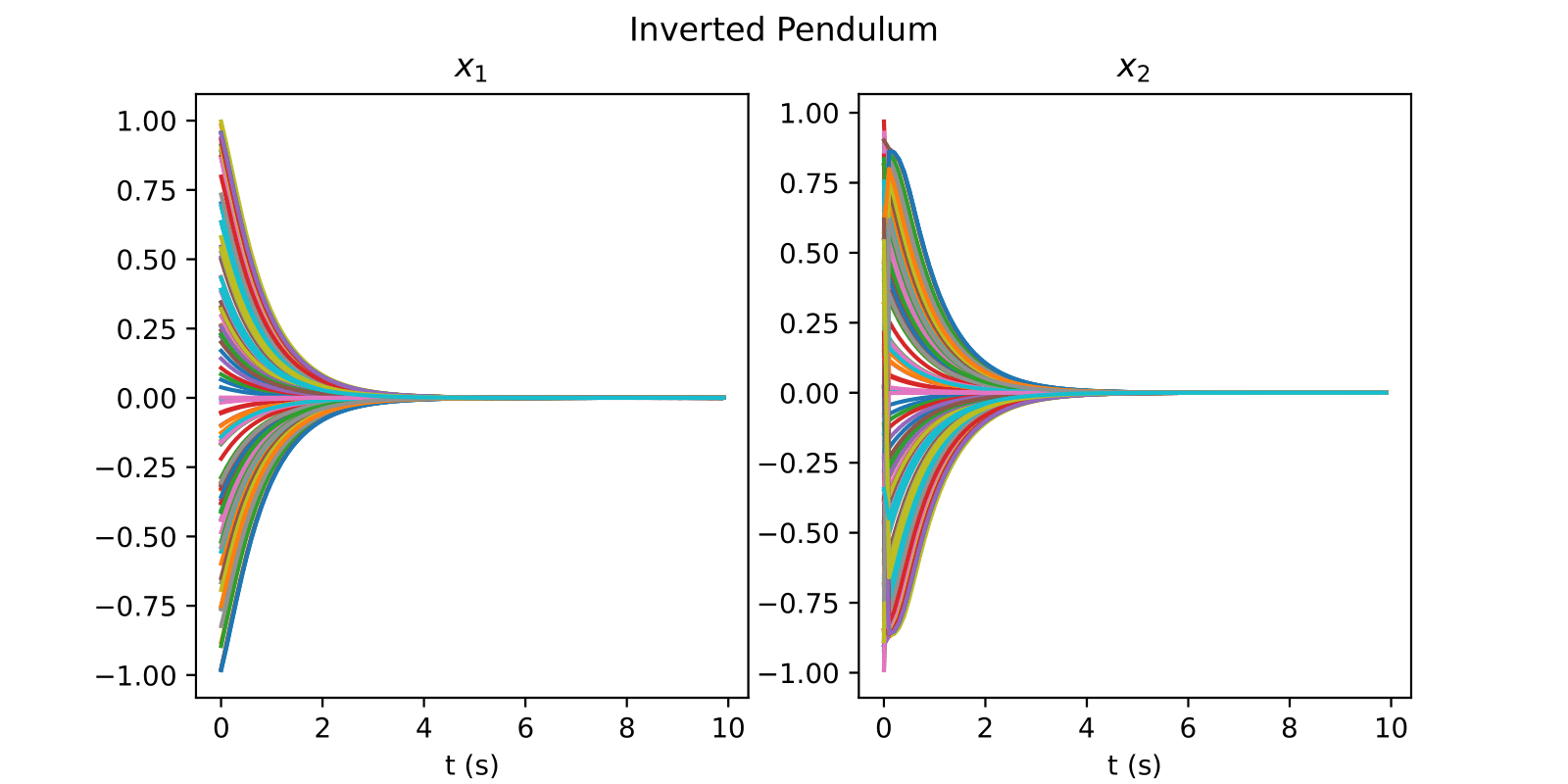}
    \caption{Trajectories of the controlled inverted pendulum system with 100 different randomly sampled initial conditions}
    \label{fig:pendulum}
\end{figure}

We have also posted a visualization of the convergence of our system for 5 time steps in Figure \ref{fig:MGLC_Visualization}. Since $f_{1,t}(x)=x_2$, we decided to omit that channel in this visualization. We have also plotted our estimate of the second channel of $x_{0|t}$, which we will call $f_{2,0|t}$. That channel shows a convergence of our system to a stable but noisy vector field, which is used in our loss function for updating $\psi$. We have also shown the evolution of our candidate Lyapunov function in the third channel.

\subsection{Damped Duffing Oscillator}

The damped Duffing oscillator is governed by the following dynamics:
\begin{align}
    \dot x_1&=x_2\\
    \dot x_2&=-0.5x_2-x_1(4x_1^2-1)+0.5u.
\end{align}
The uncontrolled system $(u=0)$ has two stable equilibrium points, $(-0.5,0)$ and $(0.5,0)$, and an unstable equilibrium point in the origin $(0,0)$. Using our MGLC algorithm, our control function is
\begin{equation}
    u(x_1,x_2)=20\tanh(-3.89859x_1) + 20\tanh(-4.46941x_2).
\end{equation}
The numerical solutions of our system, plotted in Figure \ref{fig:ddo}, show that our control function is stabilizing.
\begin{figure}[H]
    \centering
    \includegraphics[width=\linewidth]{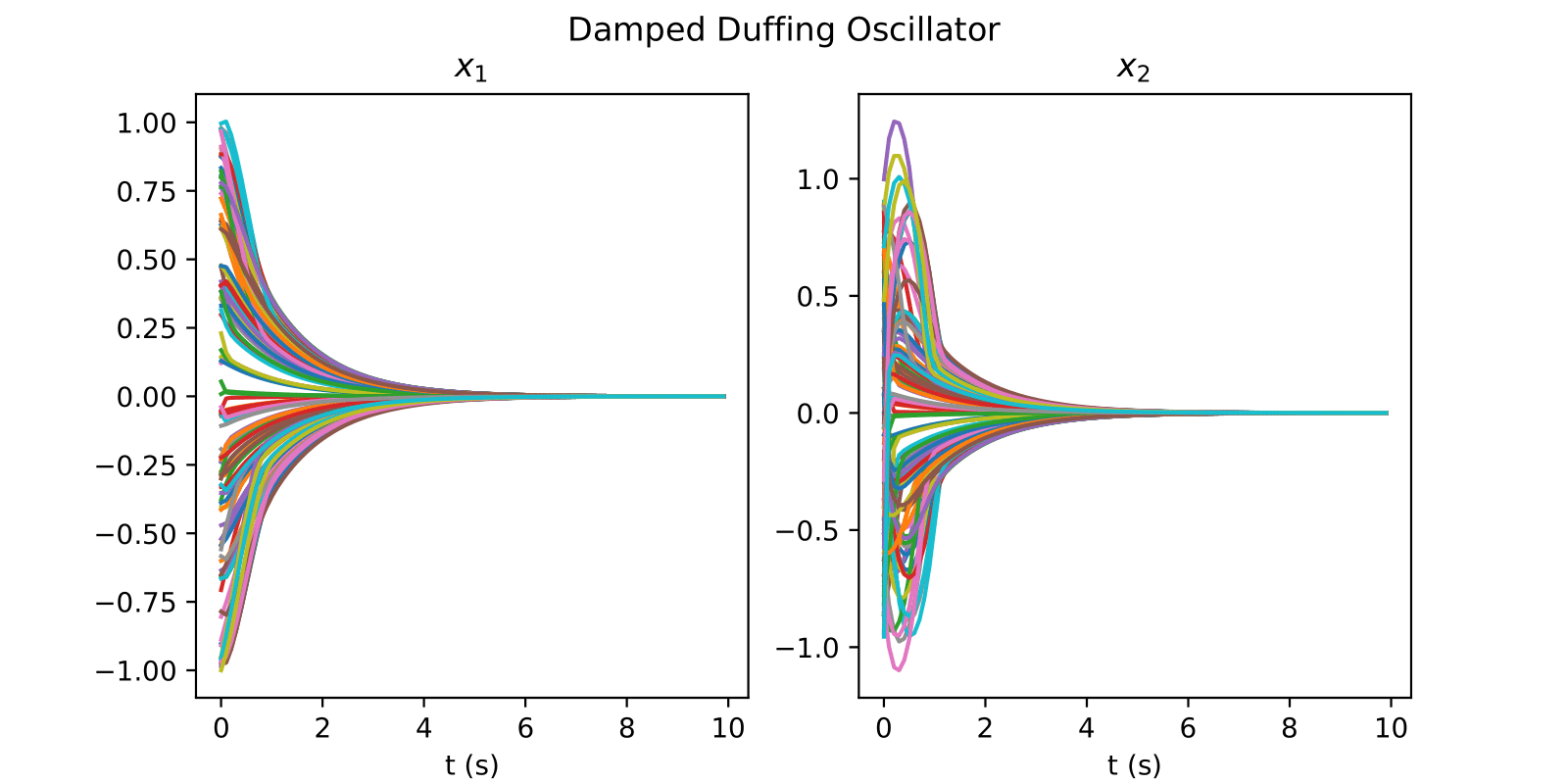}
    \caption{Trajectories of the controlled damped Duffing oscillator system with 100 different randomly sampled initial conditions}
    \label{fig:ddo}
\end{figure}

\subsection{Van Der Pol Oscillator}

The controlled Van Der Pol oscillator system is governed by the following dynamics:
\begin{align}
    \dot x_1&=2x_2\\
    \dot x_2&=-0.8x_1+2x_2-10x_1^2x_2+u.
\end{align}
The uncontrolled system $(u=0)$ shows a limit cycle in a neighborhood near the origin, and an unstable equilibrium point in the origin $(0,0)$. Using our MGLC algorithm, our control function is
\begin{equation}
    u(x_1,x_2)=20\tanh(-5.05384x_1) + 20\tanh(-3.25052x_2).
\end{equation}
The numerical solutions of our system, plotted in Figure \ref{fig:vdp}, show that our control function is stabilizing.
\begin{figure}[H]
    \centering
    \includegraphics[width=\linewidth]{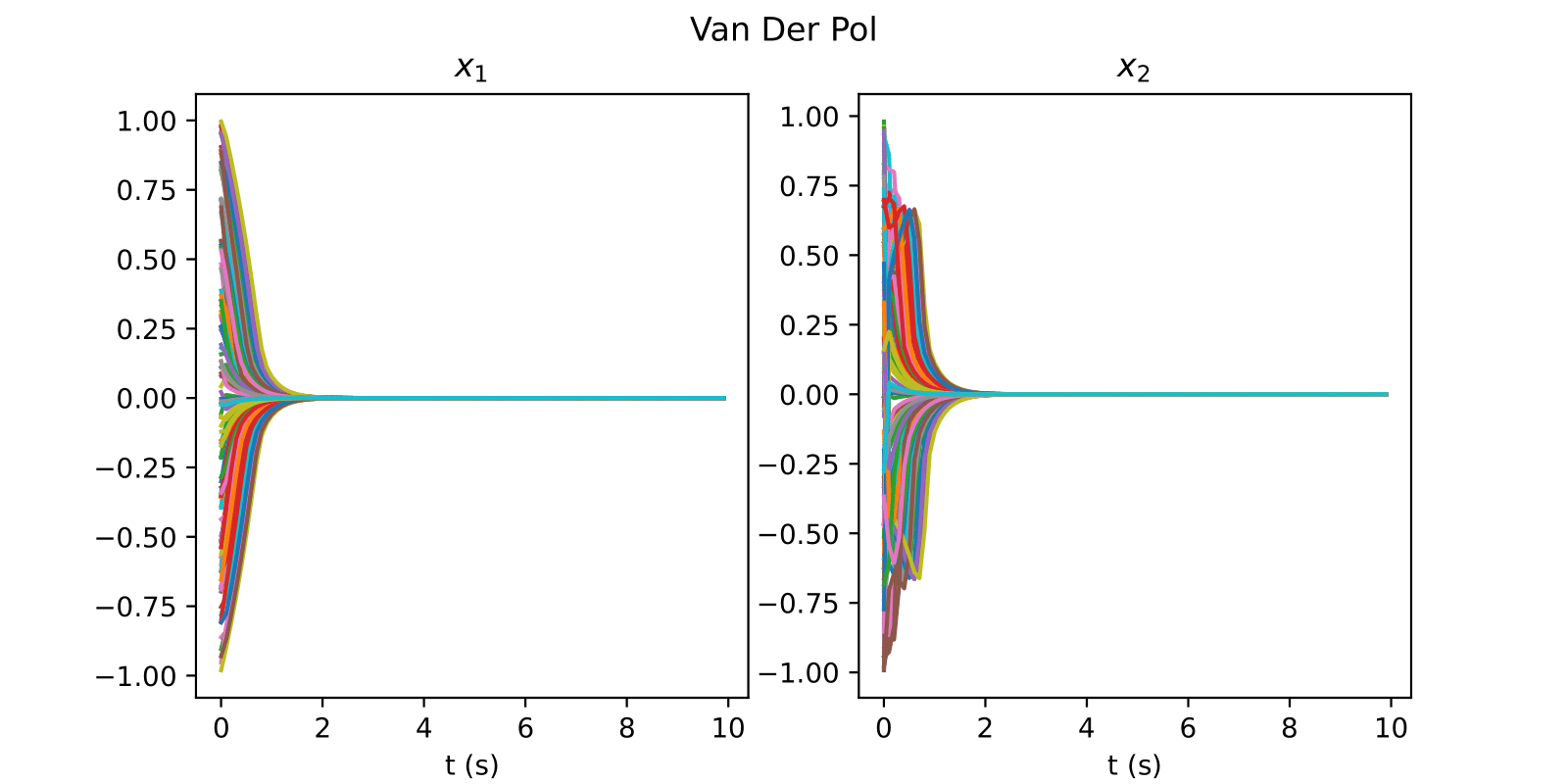}
    \caption{Trajectories of the controlled Van Der Pol oscillator system with 100 different randomly sampled initial conditions}
    \label{fig:vdp}
\end{figure}

\subsection{Noisy Inverted Pendulum}

In this experiment, we consider the pendulum system in equation \eqref{eq:pendulum} with uncertainties in the parameters $m,l$ and the control $u$. At each time step, we replace $m,l$ in the system with $m+z_1,l+z_2$, and the control is given by $(1+z_3)u^{(\psi)}(x)$, where $z_i$ are sampled from the uniform distribution $U(-0.05,0.05)$ for $i=1,2,3$.
\begin{equation}
    u(x_1,x_2)=20\tanh(-4.01703x_1) + 20\tanh(-3.63485x_2).
\end{equation}
To verify the stability of this system, we solved it numerically with 100 different initial conditions using the Euler-Maruyama scheme and plotted the trajectories in Figure \ref{fig:noisy_pendulum}. The plot shows that all the trajectories converged to the origin, thus showing that our control function is stabilizing.
\begin{figure}[H]
    \centering
    \includegraphics[width=\linewidth]{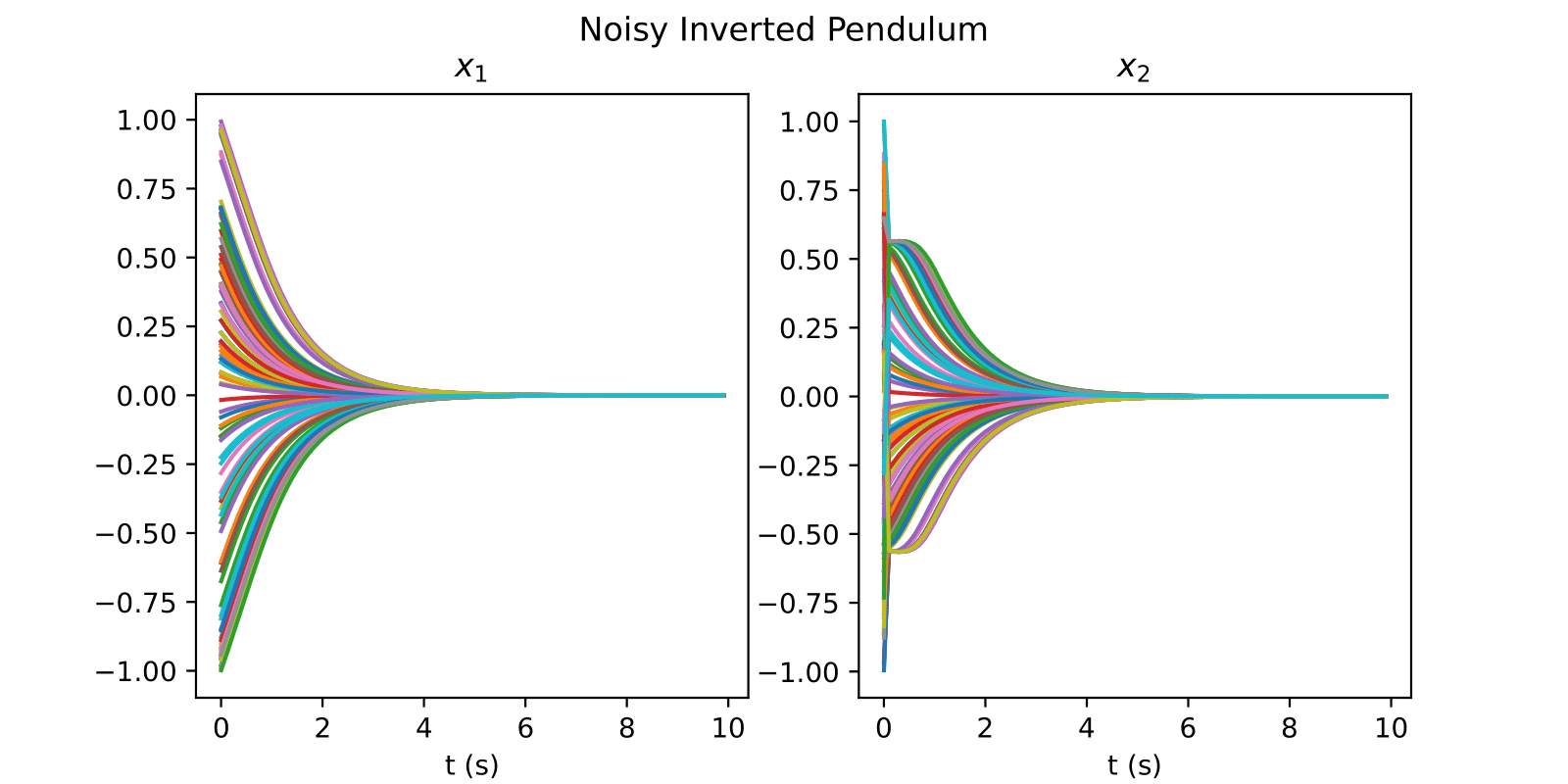}
    \caption{Trajectories of the controlled inverted pendulum system designed with uncertainties in parameters and control inputs on 100 different randomly sampled initial conditions}
    \label{fig:noisy_pendulum}
\end{figure}

\section{CONCLUSIONS AND FUTURE WORKS}

\subsection{Conclusions}

This paper introduces a new approach to the control of nonlinear systems by identifying the closest asymptotically stable vector field relative to a predetermined manifold. We employ diffusion models to aid us in estimating this stable vector field using Tweedie's estimate. We finally use this estimate to update the parameters of our control function. This method helps us achieve fast zero-shot control for control problems that are unseen by the diffusion model. This work, as a result, presents a novel application of diffusion models for transfer learning in control problems. Furthermore, the design of a stabilizing control only needs evaluations of the vector field $f(x,u)$ at grid points $X$, thus showing that this can easily be implemented as a data-driven approach.

\subsection{Future Works}

One key area of further exploration is the extension of this work to systems of high dimensions. Since the diffusion model outputs images in a pixel format, we were able to only work with 2D systems. The extension to high dimensions may be possible using models such as latent diffusion \cite{rombach2022high}. The work of \cite{he2023manifold} has provided improved convergence guarantees using perfect encoders as done in latent diffusion. We expect that, equivalently, improved convergence guarantees can be expected using perfect encoders of high-dimensional systems.

\section{ACKNOWLEDGMENTS}

We sincerely acknowledge the support of Professor Yaoliang Yu from the Department of Computer Science, University of Waterloo for providing us with GPUs to train our model and for his valuable feedback on the paper.


\bibliography{cdc}
\bibliographystyle{plain}

\end{document}